\documentclass[letterpaper, 10 pt, conference]{ieeeconf}

\usepackage[utf8]{inputenc}
\usepackage[T1]{fontenc}

\IEEEoverridecommandlockouts
\usepackage{graphics}
\usepackage{epsfig}
\usepackage{mathptmx}
\usepackage{times}
\usepackage{amsmath}
\usepackage{amssymb}
\usepackage{mathrsfs}
\usepackage{algorithm}
\usepackage{algorithmic}
\usepackage{graphicx}
\usepackage{booktabs}
\usepackage{array}
\usepackage{caption}
\usepackage{multirow}
\usepackage{cite}
\usepackage{bm}
\usepackage{dblfloatfix}
\newtheorem{assumption}{Assumption}
\newtheorem{proposition}{Proposition}
\newtheorem{remark}{Remark}

\title{\LARGE \bf
Learning to Adapt: Reptile-D-Learning for Robust and Efficient Control Under Parametric Uncertainty
}

\author{Haipeng Cao, Zhaolong Shen and Quan Quan*
\thanks{This work was supported in part by the National Natural Science Foundation of China under Grant 62573021.}
\thanks{The authors are with the School of Automation Science and Electrical Engineering, Beihang University, Beijing 100191, P.R. China.
        {\tt\small \{20375039, shenzhaolong, qq\_buaa\}@buaa.edu.cn}}
\thanks{*Corresponding author.}
}

\begin{document}

\maketitle
\thispagestyle{empty}
\pagestyle{empty}

\begin{abstract}

Learning-based Lyapunov Control (LLC) provides formal stability guarantees for nonlinear systems, but its validity relies heavily on accurate system models. Parameter variations and uncertainties may invalidate stability constraints, leading to costly retraining. Although D-learning can estimate Lyapunov derivatives without relying on explicit dynamics models, it remains limited by single-task dynamics and degrades under large parameter shifts. We propose Reptile-D-learning, a framework that leverages the Reptile meta-learning algorithm to capture shared dynamical structures across systems with different parameters, thereby learning a generalizable Lyapunov network initialization and a high-performance controller. Experiments on multiple nonlinear control systems demonstrate that Reptile-D-learning significantly improves both generalization and rapid adaptation to unseen parameter configurations.

\end{abstract}


\section{INTRODUCTION}

Learning-based Lyapunov Control (LLC) methods \cite{1, 3, 5, 9, berkenkamp2017} have emerged as a principled framework by jointly learning Lyapunov functions and control policies. These methods provide formal stability guarantees for neural network controllers, thereby addressing the inherent opacity and lack of safety certificates in deep reinforcement learning, but such guarantees can be brittle in dynamic environments. The standard Lyapunov condition $\dot{V}(x) < 0$ is inherently tied to a specific nominal dynamics model $f(x, u)$. In practical robotic applications, physical parameters such as mass, friction coefficients, and payloads are often time-varying or uncertain, which can distort the underlying energy landscape, invalidate learned stability certificates, and necessitate costly retraining \cite{6, 8}.

Although D-learning \cite{10} relaxes model dependence by using an auxiliary network $D(x,u)$ for model-free Lyapunov derivative estimation, it often overfits to single-task dynamics. Significant parameter shifts cause dynamics mismatch, leading to biased derivative estimates and degraded control \cite{11,12}, as illustrated in Fig.~\ref{fig:hero}. This is especially critical during online operation, where data collection is limited, expensive, and time-sensitive.

These observations motivate a cross-parameter learning principle: instead of learning for a fixed instance, we learn an initialization capturing shared structures across task distributions for rapid adaptation. This mechanism must preserve stability objectives while remaining computationally tractable for the cascaded Lyapunov, Dfunction, and policy networks.

To address these challenges, we propose \textbf{Reptile-D-learning}, a unified bilevel framework for robust cross-parameter stabilization. We represent the Lyapunov network, D-network, and control policy as a joint meta-parameter $\Theta=\{\theta_V,\psi_D,\phi\}$, and optimize the expected post-adaptation stability loss over a task distribution. Reptile \cite{17} is adopted as a first-order approximate solver for this bilevel objective. Compared with second-order methods such as MAML \cite{16}, this design avoids expensive dense block-Hessian computations and is therefore better suited to D-learning's three-network coupled constraints.

\begin{figure}[t]
    \centering
    \includegraphics[width=\columnwidth]{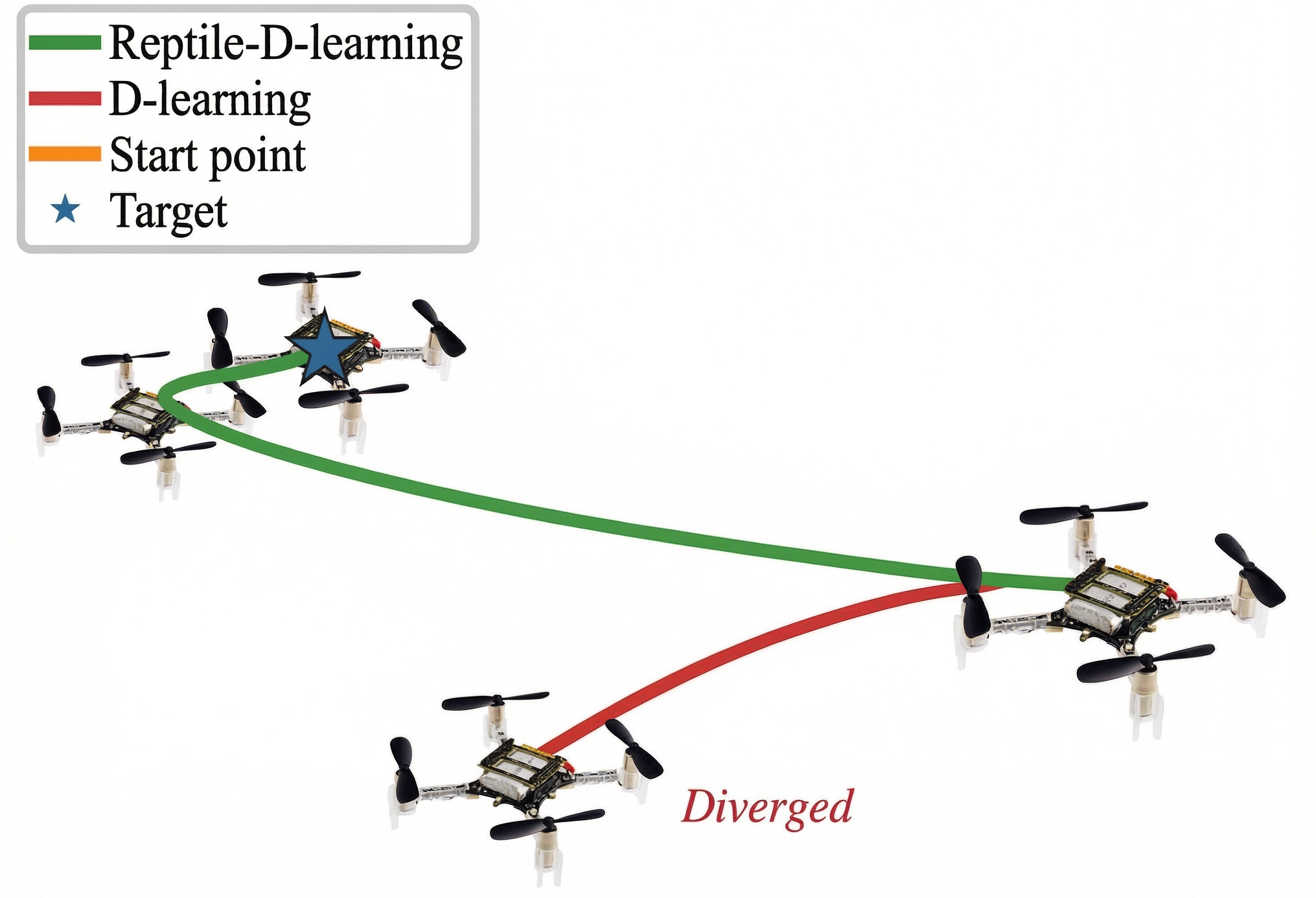}
    \caption{3D trajectory comparison under 1.5$\times$ mass perturbation. The D-learning controller diverges, while Reptile-D-learning converges to the target.}
    \label{fig:hero}
  \end{figure}

The main contributions are summarized as follows:
\begin{itemize}
    \item We formulate cross-parameter LLC adaptation as a \textbf{unified bilevel optimization} problem over the joint meta-parameter $\Theta$. We cast D-learning as the task-specific inner solver and Reptile as the first-order meta-optimizer.
    \item We demonstrate that Reptile is a \textbf{computationally efficient first-order solver} specifically suited for the cascaded constraints of D-learning. Furthermore, we provide a gradient-level analysis showing that the meta-update preserves cross-task gradient consistency under a dynamics decomposition.
    \item We evaluate \textbf{Reptile-D-learning} on three nonlinear benchmarks, showing improved generalization, rapid limited-data adaptation, and stable control performance.
\end{itemize}

\begin{figure*}[t]
    \centering
    \includegraphics[width=\textwidth]{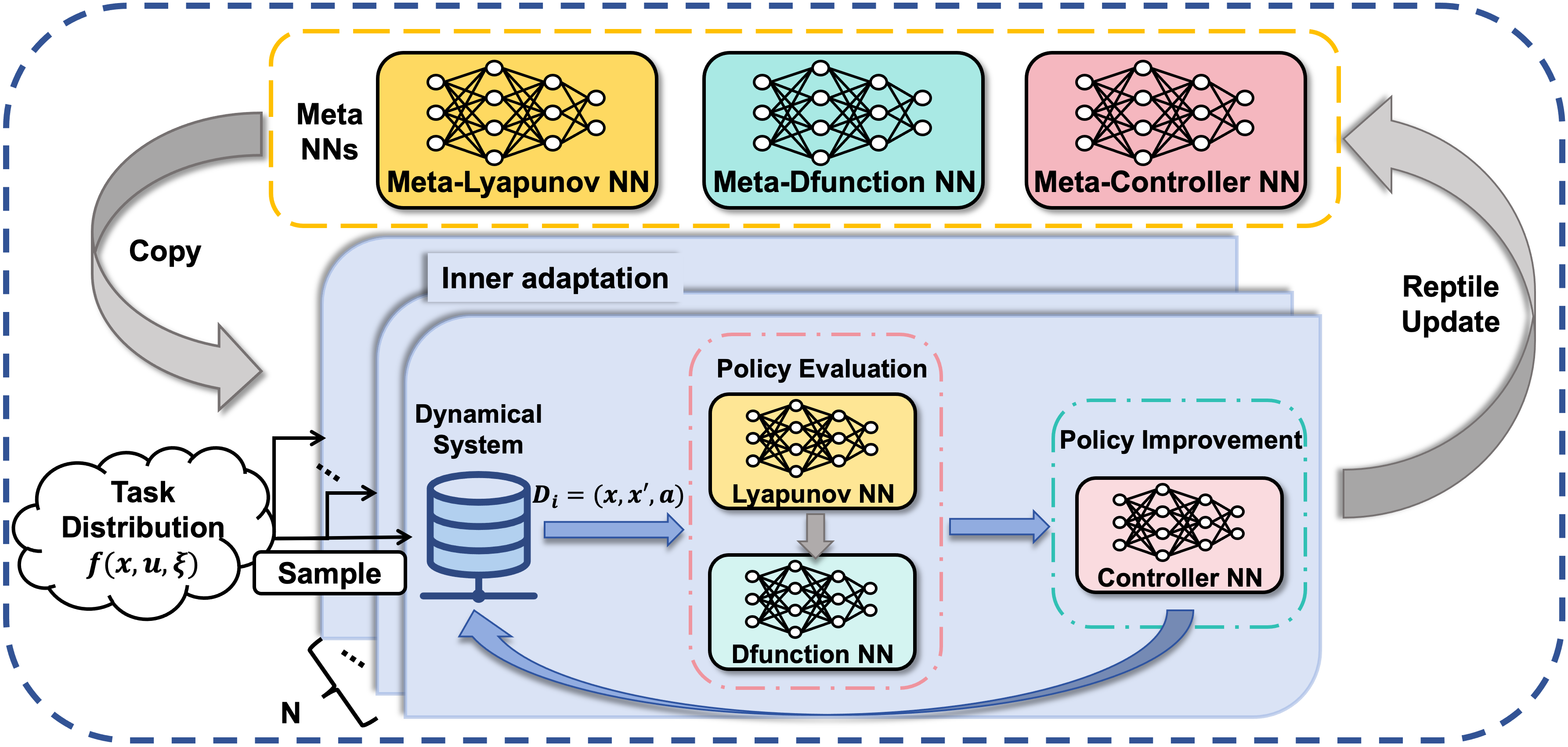}
    \caption{Overview of the Reptile-D-learning framework. Three meta-networks (Meta-Lyapunov NN, Meta-Dfunction NN, Meta-Controller NN) are copied to initialize $N$ parallel sub-tasks sampled from the task distribution $p(\Xi)$. Each sub-task runs inner-loop adaptation via D-learning: the Policy Evaluation phase trains the Lyapunov and Dfunction networks, and the Policy Improvement phase updates the controller. The adapted parameters are aggregated via the Reptile meta-update to refine the meta-networks.}
    \label{fig:overview}
  \end{figure*}

\section{RELATED WORK}

\noindent\textbf{Learning-based Lyapunov Control.}
LLC methods jointly learn control policies and Lyapunov functions to provide verifiable stability guarantees for nonlinear systems \cite{1, 2, 3, 4, 5, berkenkamp2017}.
Existing approaches typically embed stability conditions into policy optimization or constrained learning pipelines
to ensure closed-loop stability alongside control performance \cite{7, 9}. However, these methods generally rely on a single nominal dynamics model
or a fixed parameter distribution. When parameters such as mass, friction, or payload drift, the original Lyapunov certificate can become invalid,
often necessitating costly retraining on the new task \cite{6, 8}.

\noindent\textbf{D-learning.}
To reduce the dependency on explicit dynamics models,
D-learning introduces a Dfunction network to estimate the Lyapunov derivative,
thereby decoupling stability constraints from dynamics modeling \cite{10}.
This approach offers advantages for black-box systems. However,
existing D-learning methods are primarily designed for single-task or narrow-distribution training.
When facing large distribution shifts across parameters, the derivative estimation network can exhibit systematic mismatch,
degrading both the transferability of stability constraints and control performance \cite{11, 12}.

\noindent\textbf{Meta-Learning for Adaptive Control.}
Meta-reinforcement learning has been applied to initialize policies, value functions, or system identification modules for control under parametric uncertainty \cite{14, nagabandi2019, hospedales2021}, significantly reducing fine-tuning steps in new environments. More broadly, transfer-learning and reward-driven RL paradigms, from survey-level perspectives to classical value-based methods, emphasize adaptation efficiency \cite{13, 21}. However, most of these methods optimize reward without explicit Lyapunov or stability constraints. Recent works \cite{15} begin to bridge this gap by integrating MAML with Neural Lyapunov Functions for adaptive stability certification, yet they require explicit dynamics models, learn only the Lyapunov function without jointly optimizing the controller, and rely on second-order derivatives that incur substantial computational overhead when the inner-loop involves multiple interacting networks. Unlike meta-NLF \cite{15}, which requires explicit dynamics and learns only $V$, and unlike reward-focused meta-RL \cite{14, nagabandi2019}, which provides no stability certificates, the proposed Reptile-D-learning jointly meta-learns $V$, $D$, and $\pi$ in a model-free, stability-guaranteed framework.

\section{PRELIMINARIES}

\subsection{Lyapunov Stability and Parametric Uncertainty}

Consider a closed-loop dynamical system with parametric uncertainty:
\begin{equation}
\dot{x} = f(x, u; \xi), \quad u = \pi_\phi(x), \quad x(0) = x_0,
\label{eq:system}
\end{equation}
where $x \in \mathcal{X} \subseteq \mathbb{R}^n$ is the state, $u \in \mathcal{U} \subseteq \mathbb{R}^m$ is the control input, $f$ is the Lipschitz-continuous dynamics, $\pi_\phi$ is a parameterized control policy, and $\xi \in \Xi \subseteq \mathbb{R}^p$ represents unknown or time-varying physical parameters (e.g., mass, friction, payload).

A continuously differentiable function $V(x)$ defined on an open neighborhood $\mathcal{D}$ of the origin is a \textit{Lyapunov function} if it satisfies:
\begin{equation}
V(0) = 0, \quad V(x) > 0, \quad \dot{V}(x) < 0, \quad \forall x \in \mathcal{D} \setminus \{0\},
\label{eq:lyapunov}
\end{equation}
where $\dot{V}(x) = \nabla_x V(x)^\top f(x, \pi_\phi(x); \xi)$. Satisfaction of \eqref{eq:lyapunov} certifies asymptotic stability of the origin. Crucially, this condition is tied to a \textit{specific} $\xi$; when $\xi$ changes, the certificate may be invalidated.

\subsection{D-learning and Dynamics Encoding}

D-learning \cite{10} introduces a Dfunction network $D_{\psi_D}(x, u)$ to approximate the Lyapunov derivative without explicit knowledge of $f$:
\begin{equation}
D_{\psi_D}(x, u) \approx \nabla_x V_{\theta_V}(x)^\top f(x, u; \xi).
\label{eq:D_def}
\end{equation}
By fitting $D_{\psi_D}$ to the discrete-time difference of the learned Lyapunov function $V_{\theta_V}$, the D-network implicitly encodes the system dynamics. The stability condition $\dot{V}(x) < 0$ is then enforced through $D_{\psi_D}(x, \pi_\phi(x)) < 0$, decoupling stability verification from model knowledge.

D-learning operates in a \textit{policy evaluation--policy improvement} loop: the evaluation phase trains $V_{\theta_V}$ and $D_{\psi_D}$ on sampled trajectories, and the improvement phase uses $D_{\psi_D}$ to guide the controller $\pi_\phi$ toward faster convergence. We denote the joint parameter of all three networks as
\begin{equation}
\Theta = \{\theta_V,\; \psi_D,\; \phi\},
\label{eq:joint_param}
\end{equation}
and the task-level loss as $L_{\mathcal{T}_\xi}(\Theta)$ (detailed in Section~IV-A).

This dynamics encoding property has a direct consequence for cross-parameter generalization. Decomposing the dynamics into a shared component and a task-specific perturbation,
\begin{equation}
f(x, u; \xi_i) = \bar{f}(x, u) + \delta_i(x, u; \xi_i),
\label{eq:dynamics_decompose}
\end{equation}
where $\bar{f}(x,u) = \mathbb{E}_{\xi}[f(x,u;\xi)]$ is the mean dynamics across the parameter distribution and $\delta_i$ is the residual for task $\xi_i$, the Dfunction decomposes accordingly:
\begin{equation}
D_{\psi_D}(x, u) \approx \underbrace{\nabla_x V_{\theta_V}(x)^\top \bar{f}(x, u)}_{\bar{D}:\;\text{shared component}}
+ \underbrace{\nabla_x V_{\theta_V}(x)^\top \delta_i(x, u)}_{\Delta D_i:\;\text{task-specific}}.
\label{eq:D_decompose}
\end{equation}
This decomposition reveals that adapting D-learning to a new parameter configuration amounts to calibrating the task-specific component $\Delta D_i$ from a shared baseline $\bar{D}$. If a meta-initialization captures $\bar{D}$ well, the remaining adaptation focuses on residual correction rather than relearning the full dynamics.

\subsection{Cross-Parameter Adaptation Problem}

We treat each parameter configuration $\xi_i \sim p(\Xi)$ as defining an independent control task $\mathcal{T}_{\xi_i}$. Each task involves the same network architecture $\Theta$ but is governed by distinct dynamics $f(\cdot; \xi_i)$. The goal is to learn a meta-initialization $\Theta^*$ such that a small number of task-level updates yields a stabilizing controller and reliable Lyapunov certificates on newly sampled tasks. As a concrete example, in the UAV system with mass and inertia shifted from nominal values (e.g., $1.5\times$), adaptation from $\Theta^*$ should reach high success rates and fast convergence in substantially fewer steps than retraining standard D-learning on the shifted system.

This is formalized as the bilevel optimization:
\begin{equation}
\min_{\Theta}\; \mathcal{J}(\Theta)
= \mathbb{E}_{\xi \sim p(\Xi)}\!\left[L_{\mathcal{T}_\xi}\!\left(\Theta_\xi^K\right)\right],
\label{eq:bilevel}
\end{equation}
Here, $L_{\mathcal{T}_\xi}$ denotes the task-level training objective for task $\mathcal{T}_\xi$. Its explicit decomposition and optimization procedure are presented in Section~IV-A.
where $\Theta_\xi^K$ denotes the parameters obtained after $K$ steps of D-learning adaptation on task $\mathcal{T}_\xi$ starting from $\Theta$:
\begin{equation}
\Theta_\xi^0 = \Theta, \quad
\Theta_\xi^{k+1} = \Theta_\xi^k - \alpha\, \nabla_\Theta L_{\mathcal{T}_\xi}(\Theta_\xi^k).
\label{eq:inner_update}
\end{equation}

Computing the exact gradient of \eqref{eq:bilevel} via MAML \cite{16} requires differentiating through the $K$-step adaptation, involving the Hessian of $L_{\mathcal{T}_\xi}$ with respect to $\Theta$. As we show in Section~IV-B, the cascading dependencies in D-learning produce a dense block Hessian, making this computation intractable. Instead, we adopt Reptile \cite{17} as a first-order approximate solver:
\begin{equation}
\Theta \leftarrow \Theta + \epsilon \cdot \frac{1}{N}\sum_{i=1}^{N}(\Theta_{\xi_i}^K - \Theta),
\label{eq:reptile}
\end{equation}
which requires only the parameter difference before and after adaptation. In Section~IV, we show that this first-order update not only approximates the same meta-objective as MAML, but also preserves cross-task gradient consistency under the dynamics decomposition~\eqref{eq:dynamics_decompose}.


\section{PROPOSED METHOD}

An overview of the proposed Reptile-D-learning framework is shown in Fig.~\ref{fig:overview}.

\subsection{Task-Level D-learning Losses}

For each task $\mathcal{T}_\xi$, the adaptation in \eqref{eq:inner_update} minimizes the joint loss $L_{\mathcal{T}_\xi}(\Theta) = L_V + L_D + L_\pi$. Given a dataset $\{(x_i, x_i', a_i)\}_{i=1}^M$ collected under $\pi_\phi$ with $x_i' = x_i + \Delta t \cdot f(x_i, a_i; \xi)$, the three components are:

\noindent\textbf{Lyapunov loss.} Enforces positive-definiteness and derivative negativity:
\begin{equation}
L_V(\theta_V)
= \frac{1}{M}\sum_{i=1}^{M}\!\left[\max(-V_{\theta_V}(x_i), 0)
+ \max(\dot{V}_{\theta_V}(x_i), 0)\right]
+ V_{\theta_V}^2(0),
\label{eq:loss_V}
\end{equation}
where $\dot{V}_{\theta_V}(x_i) = (V_{\theta_V}(x_i') - V_{\theta_V}(x_i))/\Delta t$.

\noindent\textbf{Dfunction loss.} Fits $D_{\psi_D}$ to the discrete Lyapunov derivative:
\begin{equation}
L_D(\psi_D, \theta_V)
= \frac{1}{M}\sum_{i=1}^{M}\!\left(\frac{D_{\psi_D}(x_i, a_i) - \dot{V}_{\theta_V}(x_i)}{\|x_i\|^2 + \lambda}\right)^{\!2}
\!+ D_{\psi_D}^2(0, 0),
\label{eq:loss_D}
\end{equation}
where $\lambda > 0$ prevents division by zero and $D_{\psi_D}^2(0,0)$ enforces the equilibrium boundary condition $\dot{V}(0) = 0$.

\noindent\textbf{Controller loss.} Drives $\pi_\phi$ to minimize the Dfunction, i.e., to ensure $\dot{V} < 0$:
\begin{equation}
L_\pi(\phi, \psi_D)
= \frac{1}{M}\sum_{i=1}^{M}\!\left[\frac{D_{\psi_D}(x_i, \pi_\phi(x_i))}{\|x_i\|^2 + \lambda'}
+ \max\!\big(D_{\psi_D}(x_i, \pi_\phi(x_i)),\, 0\big)\right].
\label{eq:loss_pi}
\end{equation}

\noindent\textbf{Cascading dependency.} These losses are \textit{coupled} $L_D$ depends on $V_{\theta_V}$ through its supervision signal $\dot{V}_{\theta_V}$, and $L_\pi$ depends on $D_{\psi_D}$ through its evaluation metric. This cascading structure directly implies the choice of meta-optimizer, as analyzed next.

\subsection{Reptile as First-Order Solver}

To optimize the bilevel objective \eqref{eq:bilevel}, MAML computes the meta-gradient by differentiating through the $K$-step adaptation. For $K=1$:
\begin{equation}
\nabla_\Theta \mathcal{J}_{\text{MAML}}
= \mathbb{E}_\xi\!\left[\left(I - \alpha\, H_\xi\right) \nabla_{\Theta_\xi^1} L_{\mathcal{T}_\xi}(\Theta_\xi^1)\right],
\label{eq:maml_grad}
\end{equation}
where $H_\xi = \nabla_\Theta^2 L_{\mathcal{T}_\xi}(\Theta)$ is the Hessian. Due to the cascading dependency, $H_\xi$ has a dense block structure with non-zero off-diagonal blocks:
\begin{equation}
H_\xi =
\begin{pmatrix}
\frac{\partial^2 L}{\partial \theta_V^2} & \frac{\partial^2 L_D}{\partial \theta_V\, \partial \psi_D} & \bm{0} \\
\frac{\partial^2 L_D}{\partial \psi_D\, \partial \theta_V} & \frac{\partial^2 L}{\partial \psi_D^2} & \frac{\partial^2 L_\pi}{\partial \psi_D\, \partial \phi} \\
\bm{0} & \frac{\partial^2 L_\pi}{\partial \phi\, \partial \psi_D} & \frac{\partial^2 L}{\partial \phi^2}
\end{pmatrix}\!.
\label{eq:hessian}
\end{equation}
The cross-blocks arise because $L_D$ couples $(\theta_V, \psi_D)$ and $L_\pi$ couples $(\psi_D, \phi)$. MAML incurs substantially higher cost due to repeated Hessian-vector products.

Reptile avoids explicit Hessian computation entirely. A Taylor expansion of the $K=2$ Reptile update yields \cite{17}:
\begin{equation}
\mathbb{E}_\xi[\Theta_\xi^2 - \Theta]
= \underbrace{-2\alpha\, \bar{g}}_{\text{average gradient}}
+ \underbrace{\alpha^2 \mathbb{E}_\xi[H_\xi\, g_\xi^0]}_{\text{implicit second-order term}}
+ O(\alpha^3),
\label{eq:reptile_taylor}
\end{equation}
where $\bar{g} = \mathbb{E}_\xi[\nabla_\Theta L_{\mathcal{T}_\xi}]$ and $g_\xi^0 = \nabla_\Theta L_{\mathcal{T}_\xi}(\Theta)$. The second-order term $\alpha^2 H_\xi g_\xi^0$ matches the MAML gradient \eqref{eq:maml_grad} structure but is produced \textit{implicitly} via forward adaptation.

\begin{remark}[Why not other meta-learning methods?]
\label{rmk:alternatives}
(i)~\textbf{MAML} requires the dense Hessian \eqref{eq:hessian}, incurring substantially higher computational cost.
(ii)~\textbf{FO-MAML} implementation complexity is higher due to alternating inner updates rather than a single inner objective descent.
(iii)~\textbf{iMAML} \cite{rajeswaran2019} avoids Hessians via implicit differentiation but requires inner convergence to a fixed point, conflicting with our alternating structure.
(iv)~\textbf{Latent methods} learn task embeddings but lack stability certificates; conditioning $V_{\theta_V}(x; z)$ on $z$ offers no guarantee that \eqref{eq:lyapunov} holds universally.
\end{remark}

\subsection{Gradient Consistency under Dynamics Decomposition}

We now show Reptile's meta-update preserves cross-task gradient consistency, providing formal justification beyond computational convenience.

\begin{assumption}[Task perturbation structure]
\label{assump:perturbation}
The task-specific perturbations $\delta_i$ in \eqref{eq:dynamics_decompose} satisfy:
(i)~Zero mean: $\mathbb{E}_{\xi}[\delta_i(x,u;\xi)] = 0$;
(ii)~Weak correlation: $\mathbb{E}_{\xi_i \neq \xi_j}[\delta_i^\top \delta_j] \approx 0$;
(iii)~\emph{Gradient sensitivity:} $\nabla_\Theta L_{\delta_i} \approx J_\Theta\delta_i$ on $\mathcal{D}$, where $J_\Theta$ is a bounded, approximately task-invariant sensitivity Jacobian.
\end{assumption}

Under decompositions \eqref{eq:dynamics_decompose} and \eqref{eq:D_decompose}, the task gradient decomposes as $\nabla_\Theta L_{\mathcal{T}_{\xi_i}} = \nabla_\Theta L_{\bar{f}}(\Theta) + \nabla_\Theta L_{\delta_i}(\Theta)$, where $L_{\bar{f}}$ corresponds to shared dynamics and $L_{\delta_i}$ to the residual. Expanding the cross-task gradient inner product:
\begin{align}
&\left\langle \nabla_\Theta L_{\mathcal{T}_{\xi_i}},\;
\nabla_\Theta L_{\mathcal{T}_{\xi_j}} \right\rangle
= \left\|\nabla_\Theta L_{\bar{f}}\right\|^2 \notag\\
&\quad+ \left\langle \nabla_\Theta L_{\bar{f}},\, \nabla_\Theta L_{\delta_i} + \nabla_\Theta L_{\delta_j}\right\rangle
+ \left\langle \nabla_\Theta L_{\delta_i},\, \nabla_\Theta L_{\delta_j} \right\rangle.
\label{eq:inner_product}
\end{align}
Condition~(iii) bridges dynamics residuals to parameter-space gradients via $\nabla_\Theta L_{\delta_i}\approx J_\Theta\delta_i$, so conditions~(i)--(ii) imply the cross-terms in \eqref{eq:inner_product} vanish in expectation:

\begin{proposition}[Cross-task gradient consistency]
\label{prop:consistency}
Under Assumption~\ref{assump:perturbation},
\begin{equation}
\mathbb{E}_{\xi_i \neq \xi_j}\!\left[\left\langle \nabla_\Theta L_{\mathcal{T}_{\xi_i}},\; \nabla_\Theta L_{\mathcal{T}_{\xi_j}} \right\rangle\right]
\approx \left\|\nabla_\Theta L_{\bar{f}}\right\|^2.
\label{eq:core_result}
\end{equation}
\end{proposition}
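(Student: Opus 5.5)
Taking expectations term by term in the expansion \eqref{eq:inner_product} will do most of the work. The gradient decomposition $\nabla_\Theta L_{\mathcal{T}_{\xi_i}} = \nabla_\Theta L_{\bar f} + \nabla_\Theta L_{\delta_i}$ is taken as given from the text preceding the statement. Since $L_{\bar f}$ depends only on the mean dynamics $\bar f$ and not on $\xi$, the first term $\|\nabla_\Theta L_{\bar f}\|^2$ is deterministic and passes through $\mathbb{E}_{\xi_i\neq\xi_j}$ unchanged. It then suffices to show that the remaining two expectations are approximately zero.

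For the mixed term, I will use linearity and the fact that $\nabla_\Theta L_{\bar f}$ is task-independent:
\begin{equation*}
\mathbb{E}\big[\langle \nabla_\Theta L_{\bar f},\, \nabla_\Theta L_{\delta_i}+\nabla_\Theta L_{\delta_j}\rangle\big]
= \big\langle \nabla_\Theta L_{\bar f},\, \mathbb{E}[\nabla_\Theta L_{\delta_i}] + \mathbb{E}[\nabla_\Theta L_{\delta_j}]\big\rangle .
\end{equation*}
By Assumption~\ref{assump:perturbation}(iii), $\mathbb{E}[\nabla_\Theta L_{\delta_i}] \approx \mathbb{E}[J_\Theta \delta_i]$. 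Because $J_\Theta$ is approximately task-invariant, it can be pulled outside the expectation, giving $J_\Theta \mathbb{E}[\delta_i]$. This is $0$ by (i). The mixed term is therefore bounded by $\|\nabla_\Theta L_{\bar f}\|$ times the approximation errors, and boundedness of $J_\Theta$ keeps those errors controlled. Here I read $\delta_i$ as evaluated over the sampled states in $\mathcal{D}$, so that $J_\Theta\delta_i$ is a finite sum or integral and the expectation commutes with it.

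For the residual–residual term, (iii) gives $\langle \nabla_\Theta L_{\delta_i}, \nabla_\Theta L_{\delta_j}\rangle \approx \delta_i^\top J_\Theta^\top J_\Theta \delta_j$. This is where I expect the main obstacle. Condition (ii) only controls the Euclidean inner product $\mathbb{E}[\delta_i^\top\delta_j]$, not the product weighted by $J_\Theta^\top J_\Theta$. I will first try to resolve this by using independence of the draws $\xi_i\neq\xi_j$, which gives
\begin{equation*}
\mathbb{E}[\delta_i^\top M \delta_j] = \mathbb{E}[\delta_i]^\top M\, \mathbb{E}[\delta_j] = 0
\end{equation*}
for the fixed matrix $M=J_\Theta^\top J_\Theta$, using (i) again. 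If independent sampling is not granted, I will instead read (ii) as weak correlation in every fixed bilinear form. Boundedness of $J_\Theta$ then gives $|\mathbb{E}[\delta_i^\top M\delta_j]| \le \|M\|\,\|\mathrm{Cov}(\delta_i,\delta_j)\| \approx 0$.

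Finally, I will collect the three terms. The total deviation from $\|\nabla_\Theta L_{\bar f}\|^2$ consists of the approximation errors in (iii) and (ii), each multiplied by bounded factors ($\|J_\Theta\|$, $\|\nabla_\Theta L_{\bar f}\|$). This justifies the ``$\approx$'' in \eqref{eq:core_result} in the same first-order sense in which the assumption itself is stated.
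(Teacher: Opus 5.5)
Your proposal is correct and follows the same route as the paper: take expectations term by term in \eqref{eq:inner_product}, linearize the residual gradients via (iii), and kill the mixed term with (i). The one place you go beyond the paper is the residual--residual term. The paper simply asserts that (ii) yields $\mathbb{E}[\delta_i^\top J_\Theta^\top J_\Theta\delta_j]\approx 0$, whereas you correctly note that (ii) controls only the unweighted inner product, and your repair (independent sampling of $\xi_i,\xi_j$ together with (i), or a stronger bilinear-form reading of (ii)) tightens a step the paper glosses over.
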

\begin{proof}
By Assumption~\ref{assump:perturbation}(iii), $\nabla_\Theta L_{\delta_i}\approx J_\Theta\delta_i$.
Condition~(i) gives $\mathbb{E}[\nabla_\Theta L_{\delta_i}]\approx J_\Theta\mathbb{E}[\delta_i]=0$, and
condition~(ii) gives $\mathbb{E}[\langle\nabla_\Theta L_{\delta_i}, \nabla_\Theta L_{\delta_j}\rangle]\approx\mathbb{E}[\delta_i^\top J_\Theta^\top J_\Theta\delta_j]\approx 0$.
Substituting into~\eqref{eq:inner_product} yields~\eqref{eq:core_result}.
\end{proof}

Since Reptile's meta-update implicitly increases gradient agreement \cite{17}, Proposition~\ref{prop:consistency} suggests the update is primarily influenced by shared dynamics $\bar{f}$, consistent with the Dfunction decomposition \eqref{eq:D_decompose}.

\begin{figure*}[t]
    \centering
    \includegraphics[width=\textwidth]{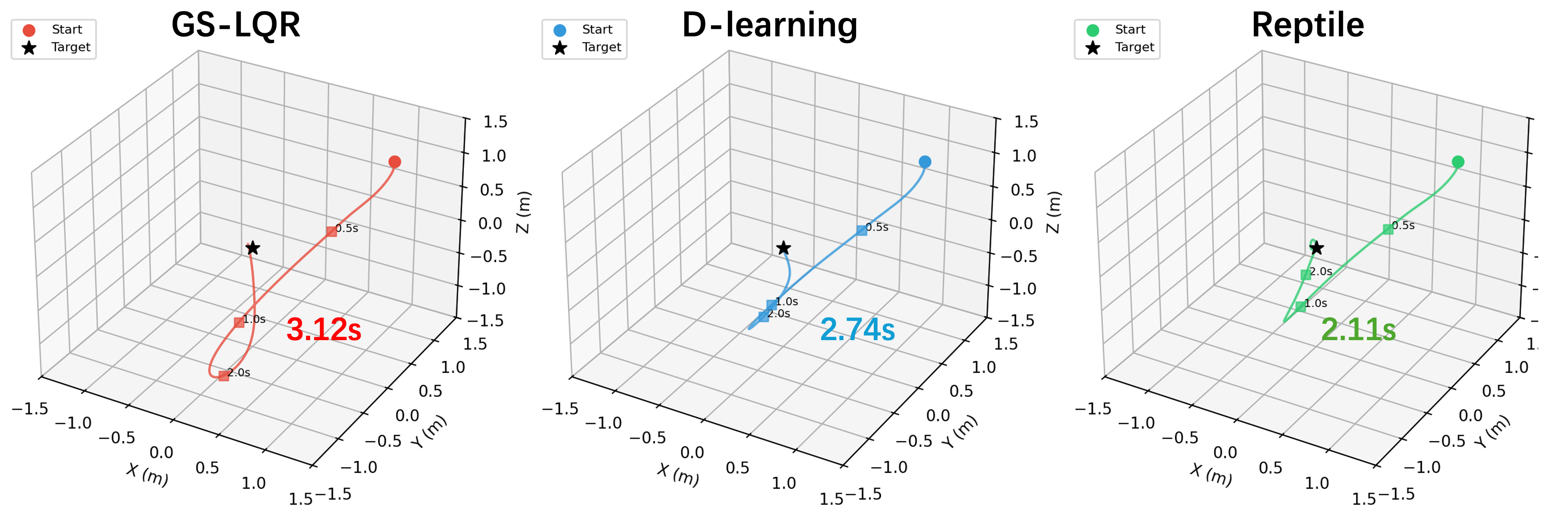}
    \caption{UAV stabilization trajectories under the benchmark system. Compared with the baseline and standard D-learning, Reptile-D-learning achieves faster and more reliable convergence to the target.}
    \label{fig:uav_compare_new}
\end{figure*}

\begin{figure*}[t]
    \centering
    \begin{minipage}{0.49\textwidth}
        \centering
        \includegraphics[width=\linewidth]{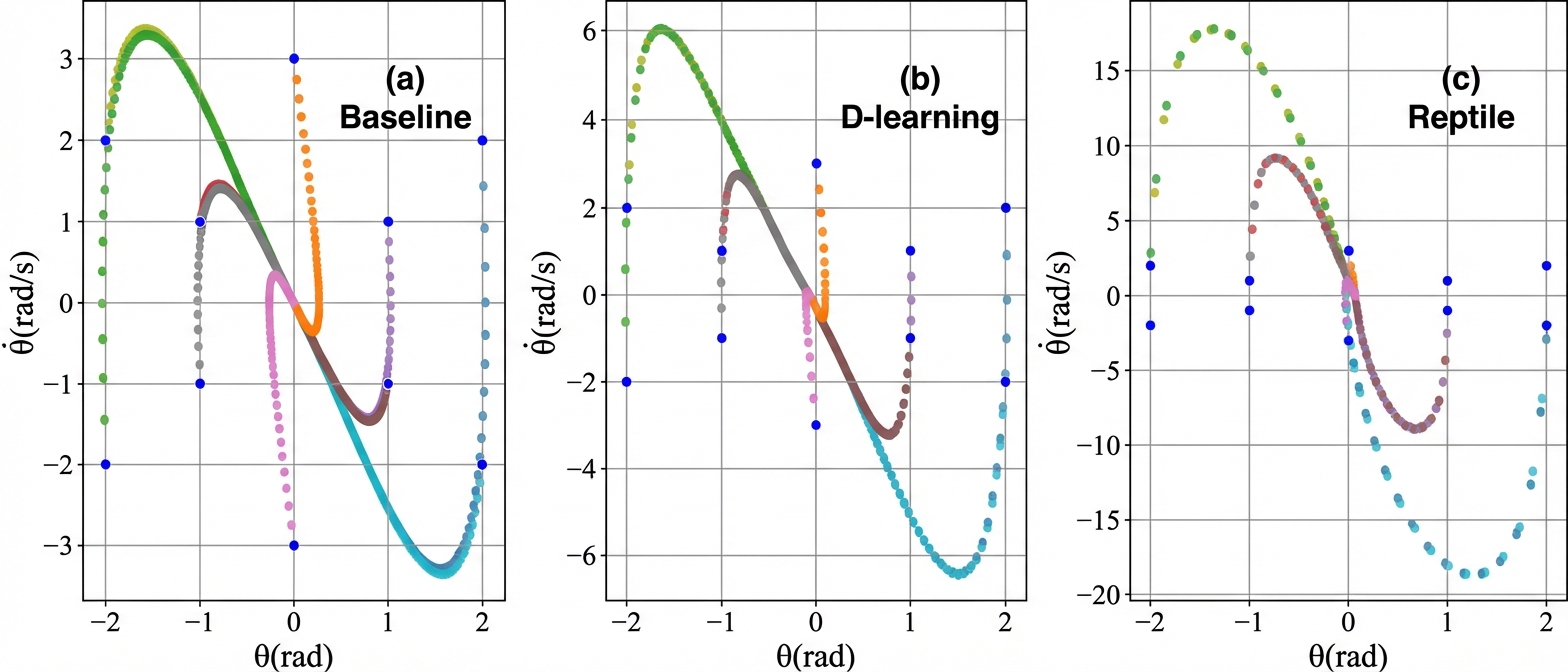}
        \captionof{figure}{Phase-portrait comparison for the inverted pendulum under the benchmark system. Reptile-D-learning demonstrates superior convergence performance compared to the baseline and standard D-learning.}
        \label{fig:inv_compare_new}
    \end{minipage}
    \hfill
    \begin{minipage}{0.49\textwidth}
        \centering
        \includegraphics[width=\linewidth]{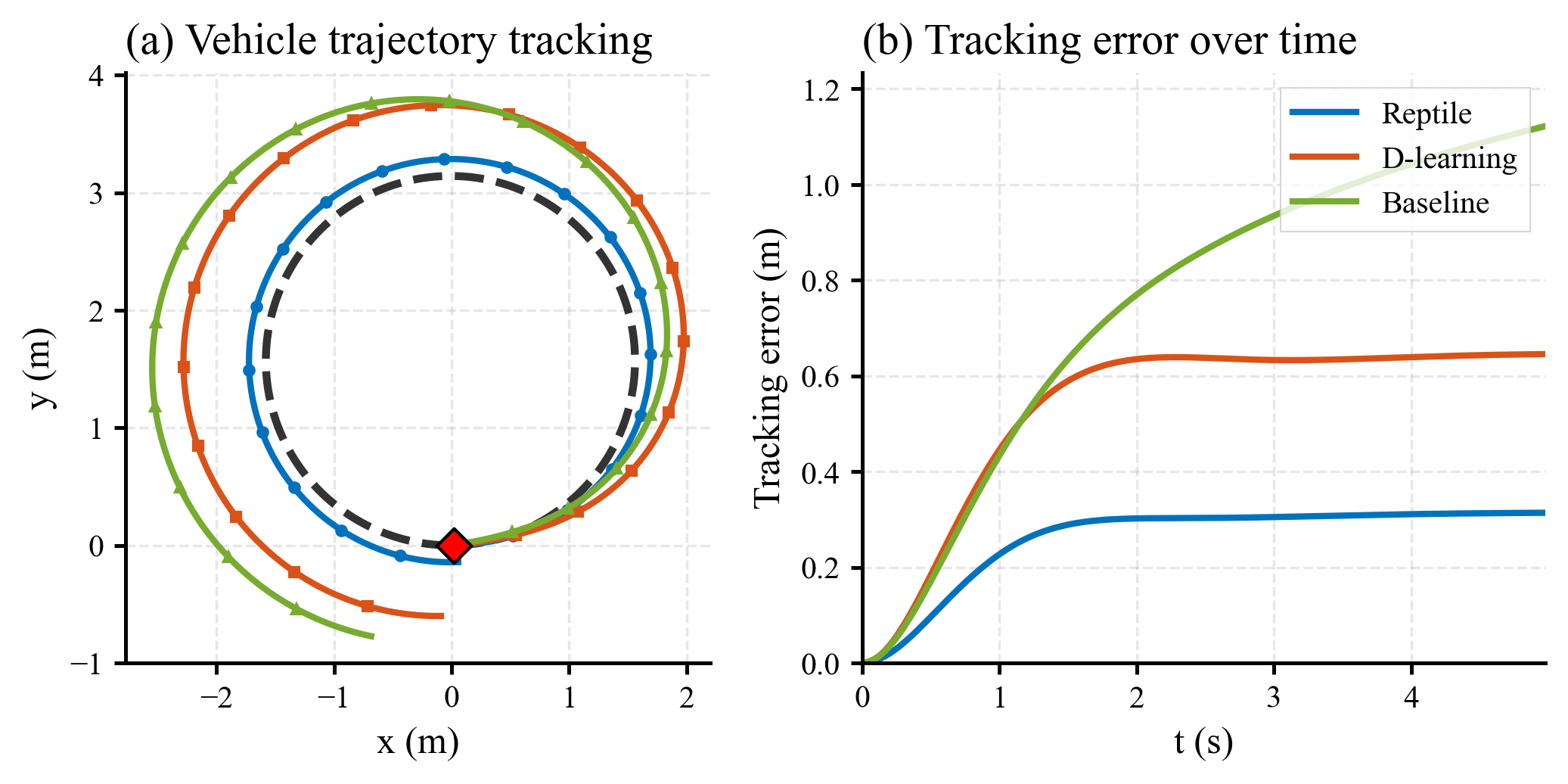}
        \captionof{figure}{Single-track car performance comparison. (a) Vehicle trajectory tracking; (b) tracking error over time. Reptile-D-learning achieves the smallest tracking error and closest trajectory to the target path.}
        \label{fig:stcar_compare_new}
    \end{minipage}
\end{figure*}

\subsection{Algorithm and Stability Analysis}

The complete Reptile-D-learning procedure is summarized in Algorithm~\ref{alg:reptile_d}.

\begin{algorithm}[t]
\caption{Reptile-D-learning}
\label{alg:reptile_d}
\begin{algorithmic}[1]
\REQUIRE Task distribution $p(\Xi)$, batch size $N$, steps $K$, rates $\epsilon, \alpha$
\ENSURE Meta-parameters $\Theta^* = \{\theta_V^*, \psi_D^*, \phi^*\}$
\STATE Initialize $\Theta = \{\theta_V, \psi_D, \phi\}$
\FOR{meta-iteration $t = 1, 2, \ldots, T$}
    \FOR{$j = 1, \ldots, N$}
        \STATE Sample $\xi_j \sim p(\Xi)$; set $\Theta_j \leftarrow \Theta$
        \FOR{$k = 1, \ldots, K$}
            \STATE Collect trajectories under $\pi_{\phi_j}$ in $f(\cdot;\xi_j)$
            \STATE \textit{Policy evaluation:} 
            \STATE \quad Update $\theta_{V,j}$ via $\nabla_{\theta_V} L_V$ \hfill\eqref{eq:loss_V}
            \STATE \quad Update $\psi_{D,j}$ via $\nabla_{\psi_D} L_D$ \hfill\eqref{eq:loss_D}
            \STATE \textit{Policy improvement:}
            \STATE \quad Update $\phi_j$ via $\nabla_\phi L_\pi$ \hfill\eqref{eq:loss_pi}
        \ENDFOR
        \STATE Store adapted parameters $\Theta_j^K$
    \ENDFOR
    \STATE \textbf{Reptile meta-update:}
    \STATE \quad $\Theta \leftarrow \Theta + \epsilon \cdot \frac{1}{N}\sum_{j=1}^{N}(\Theta_j^K - \Theta)$
\ENDFOR
\end{algorithmic}
\end{algorithm}

\noindent\textbf{Stability implications of meta-initialization.}
Suppose the controller satisfies $D_{\psi_D^*}(x, \pi_{\phi^*}(x)) < -\eta_0 \|x\|^2$ for $x \in \mathcal{D}\setminus\{0\}$ under $\bar{f}$. For a new task $\xi_\text{new}$, the actual derivative is:
\begin{align}
\dot{V}(x) &= D_{\psi_D^*}(x, \pi_{\phi^*}(x)) + \nabla_x V_{\theta_V^*}(x)^\top \delta_\text{new}(x) \notag\\
&< -\eta_0 \|x\|^2 + L_V \cdot \epsilon_\delta,
\label{eq:stability_bound}
\end{align}
where $L_V = \sup_{x \in \mathcal{D}} \|\nabla_x V_{\theta_V^*}(x)\|$ and $\epsilon_\delta = \sup_{x \in \mathcal{D}} \|\delta_\text{new}(x)\|$.

\begin{proposition}[Robustness of meta-initialization]
\label{prop:robust}
If there exists $c_\delta < \eta_0$ such that
\[
  \bigl|\nabla_x V_{\theta_V^*}(x)^\top \delta_\mathrm{new}(x)\bigr| \leq c_\delta\|x\|^2,
  \quad \forall\, x \in \mathcal{D}\setminus\{0\},
\]
then $\dot{V}(x)\leq-(\eta_0-c_\delta)\|x\|^2<0$, preserving Lyapunov negativity.
Otherwise, adaptation compensates the residual perturbation.
\end{proposition}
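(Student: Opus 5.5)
The plan is to work directly from the derivative decomposition that precedes \eqref{eq:stability_bound}, but to replace the crude uniform bound $L_V\epsilon_\delta$ by the pointwise quadratic bound in the hypothesis. The first step is to write the true Lyapunov derivative on the new task via \eqref{eq:dynamics_decompose}:
\[
\dot V(x)=\nabla_x V_{\theta_V^*}(x)^\top \bar f(x,\pi_{\phi^*}(x))+\nabla_x V_{\theta_V^*}(x)^\top \delta_{\mathrm{new}}(x,\pi_{\phi^*}(x)).
\]
The first term is the shared component $\bar D$ of \eqref{eq:D_decompose}. I would invoke the standing premise stated just before \eqref{eq:stability_bound}: the meta-trained $D_{\psi_D^*}$ represents this shared derivative, and it satisfies $D_{\psi_D^*}(x,\pi_{\phi^*}(x))<-\eta_0\|x\|^2$ under $\bar f$. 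I will treat this identification, which the paper also makes in \eqref{eq:stability_bound}, as exact.

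The second step bounds the residual term by the hypothesis. Since $a\le |a|$, we have $\nabla_x V_{\theta_V^*}(x)^\top\delta_{\mathrm{new}}(x)\le c_\delta\|x\|^2$ for every $x\in\mathcal D\setminus\{0\}$. Adding this to the first inequality gives
\[
\dot V(x)< -\eta_0\|x\|^2+c_\delta\|x\|^2=-(\eta_0-c_\delta)\|x\|^2,
\]
which already yields the stated non-strict bound. Because $c_\delta<\eta_0$ and $\|x\|>0$ away from the origin, the right-hand side is strictly negative, so $\dot V<0$ on $\mathcal D\setminus\{0\}$.

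The final step checks the remaining conditions in \eqref{eq:lyapunov}. $V_{\theta_V^*}(0)=0$ and $V_{\theta_V^*}>0$ are independent of $\xi$, since $V$ depends only on $x$; they are inherited from the meta-trained certificate, which is enforced by $L_V$ in \eqref{eq:loss_V}. Hence negativity of the derivative is the only condition affected by the task shift, and Lyapunov negativity is preserved.

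The main obstacle is the first step, namely justifying that $D_{\psi_D^*}$ equals $\nabla V^\top\bar f$ rather than only approximating it, as \eqref{eq:D_def} and \eqref{eq:D_decompose} state. If a fitting error $e(x)$ must be tracked, I would assume $|e(x)|\le c_e\|x\|^2$; the normalization by $\|x\|^2+\lambda$ in \eqref{eq:loss_D} suggests this scaling. That error can then be absorbed into $c_\delta$, requiring $c_\delta+c_e<\eta_0$. The "otherwise" clause is a qualitative remark rather than a claim to prove. When $c_\delta\ge\eta_0$, the inner-loop updates \eqref{eq:inner_update} refit $D$ to the new task, and I would only note this, not prove convergence.
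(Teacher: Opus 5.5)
Your proposal is correct and follows essentially the same route as the paper: take the decomposition $\dot V = D_{\psi_D^*} + \nabla_x V_{\theta_V^*}^\top\delta_{\mathrm{new}}$ from \eqref{eq:stability_bound} as exact, replace the uniform bound $L_V\epsilon_\delta$ by the pointwise bound $c_\delta\|x\|^2$, and add the two inequalities. Your extra remarks go a little beyond the paper's one-line argument but are not needed for the stated claim: absorbing a fitting error $c_e\|x\|^2$ into the margin, and noting that $V(0)=0$ and $V>0$ do not depend on the task.
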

\begin{proof}
From \eqref{eq:stability_bound}, for all $x\in\mathcal{D}\setminus\{0\}$,
\begin{align*}
\dot{V}(x) &= D_{\psi_D^*} + \nabla_x V_{\theta_V^*}^\top\delta_\mathrm{new} \\
           &\leq -\eta_0\|x\|^2 + c_\delta\|x\|^2 = -(\eta_0-c_\delta)\|x\|^2 < 0,
\end{align*}
by standard Lyapunov perturbation arguments~\cite{khalil2002}.
\end{proof}

\noindent\textbf{Practical notes.}
(i)~Policy evaluation and improvement use separate data batches to avoid coupling bias. 
(ii)~Differential meta-learning rates $(\epsilon_V, \epsilon_D, \epsilon_\pi)$ are recommended. $V$ captures energy topology (less sensitive), while $D$ and $\pi$ encode dynamics details (more sensitive).

\section{EXPERIMENTS}

In this section, we assess Reptile-D-learning under cross-parameter uncertainty and compare it with baseline methods. We focus on three questions: controller performance and cross-parameter generalization, few-step adaptation, and the contribution of each component through ablation.

\begin{table}[t]
    \centering
    \caption{Benchmark Systems}
    \label{tab:exp_setup}
    \footnotesize
    \setlength{\tabcolsep}{2pt}
    \renewcommand{\arraystretch}{1.05}
    \begin{tabular}{p{0.20\columnwidth} p{0.28\columnwidth} p{0.43\columnwidth}}
    \toprule
    System & Varied parameters & Evaluation metrics \\
    \midrule
    Inverted Pendulum &
    \(\xi=(m,L,b)\) &
    Convergence steps \\
    \midrule
    Single-Track Car &
    \(\xi=(a,b,m,I_z,p_{dy1})\) &
    Mean tracking error \\
    \midrule
    UAV Stabilization &
    \(\xi=(m,I_{xx},I_{yy},I_{zz})\) &
    Convergence time; stabilization success rate \\
    \bottomrule
    \end{tabular}
    \end{table}

\subsection{Experimental Setup}

Experiments are conducted on three nonlinear systems with increasing complexity: an inverted pendulum, a single-track car error dynamics model from the CommonRoad benchmark \cite{19}, and a 3D UAV stabilization task using the Crazyflie nano-quadrotor platform \cite{giernacki2017, forster2015}. This design evaluates whether a single meta-initialization can transfer across tasks ranging from low-dimensional stabilization to high-dimensional tracking and control under parameter uncertainty.

We compare Reptile-D-learning against two baselines: an LQR controller (gain-scheduled LQR for the UAV task) and standard D-learning \cite{10}. All methods use the same network architecture to ensure a fair comparison.

Table~\ref{tab:exp_setup} summarizes the benchmark systems, uncertain parameters, and evaluation protocol. During training, uncertain parameters are sampled within prescribed ranges following a domain randomization strategy \cite{18, 20}. At test time, out-of-distribution (OOD) parameter configurations are introduced to evaluate generalization beyond the training distribution.

\begin{figure}[t]
    \centering
    \includegraphics[width=\columnwidth]{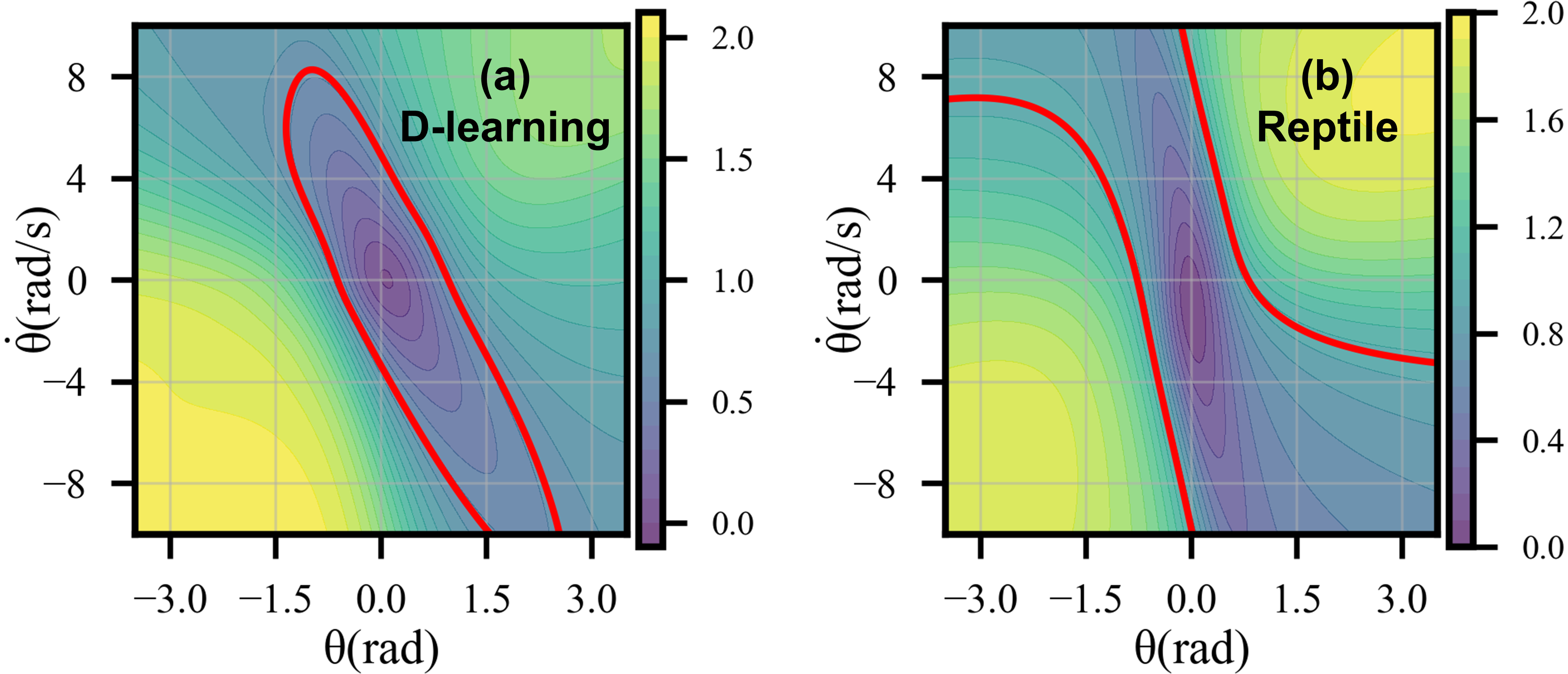}
    \caption{Comparison of estimated regions of attraction. Reptile method achieves a larger region with stronger gradients compared to standard D-learning.}
    \label{fig:roa_inv_new}
\end{figure}

\begin{figure}[t]
    \centering
    \begin{minipage}{\columnwidth}
        \centering
        \includegraphics[width=\columnwidth]{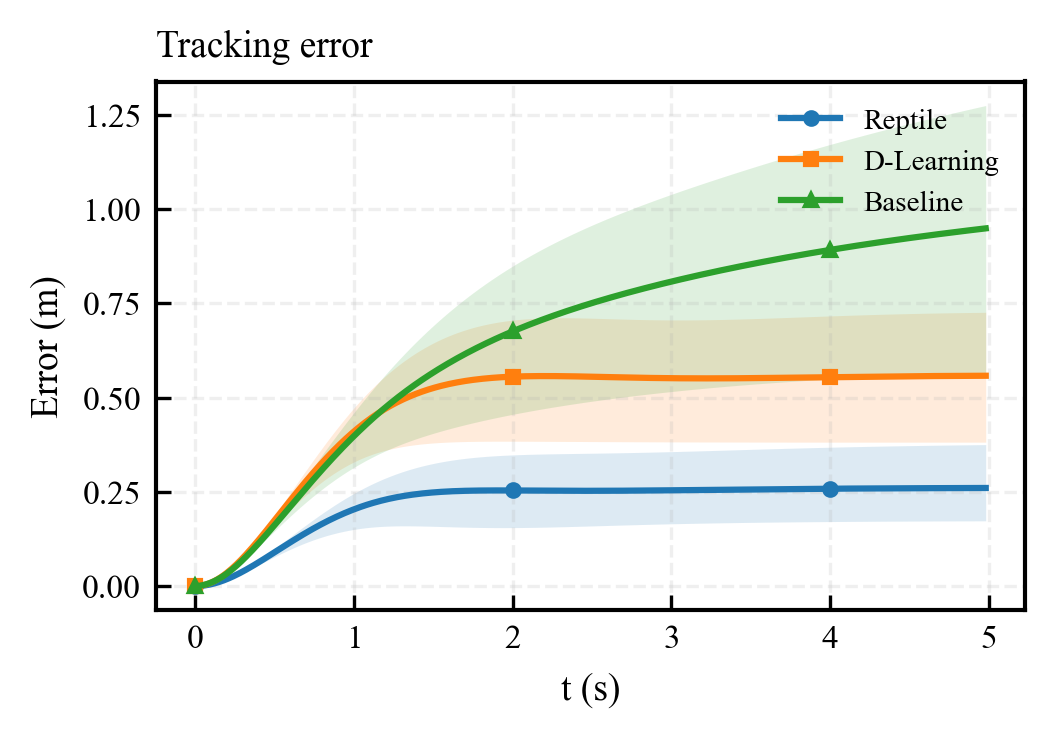}
        \captionof{figure}{Tracking error under parameter variation. Reptile-D-learning achieves the lowest error and variance compared with D-learning and baseline.}
        \label{fig:tracking_generalization}
    \end{minipage}
    \vspace{4pt}
    \begin{minipage}{\columnwidth}
        \centering
        \includegraphics[width=\columnwidth]{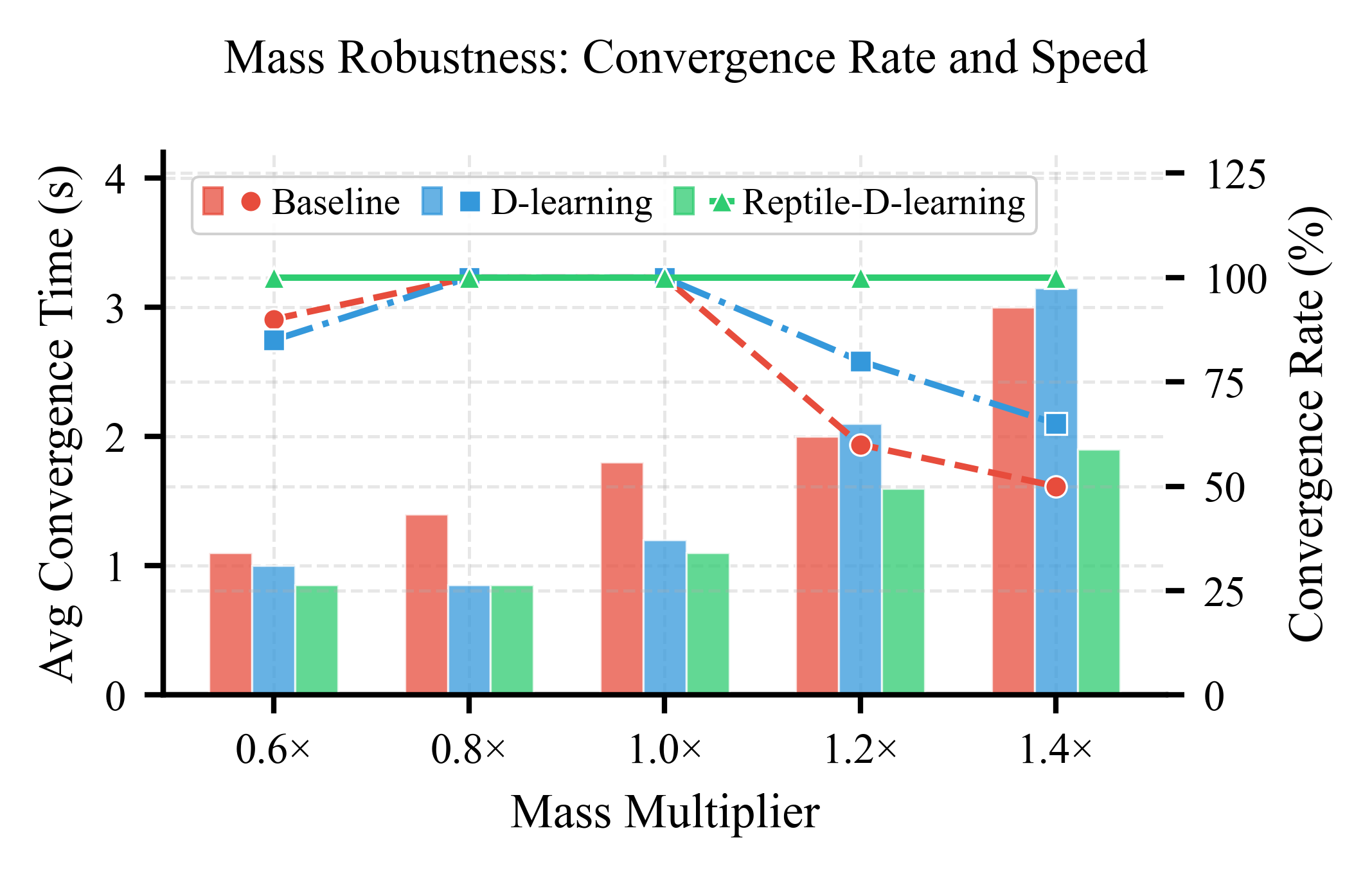}
        \captionof{figure}{Generalization under UAV mass shifts. Reptile-D-learning maintains higher convergence rate and lower convergence time as perturbation magnitude increases.}
        \label{fig:mass_robustness}
    \end{minipage}
\end{figure}

\captionsetup[table]{labelsep=newline}
\begin{table}[t]
\centering
\caption{Performance Under Different Parameter Configurations}
\label{tab:training_efficiency}
\scriptsize
\setlength{\tabcolsep}{2.5pt}
\renewcommand{\arraystretch}{1.05}
\resizebox{\columnwidth}{!}{%
\begin{tabular}{ccccccc}
\toprule
\multicolumn{3}{c}{\textbf{Parameters}} & \multirow{2}{*}{\textbf{Approach}} & \multicolumn{2}{c}{\textbf{Initial States}} & \multirow{2}{*}{\textbf{Convergence Steps}} \\
\cmidrule(r){1-3}\cmidrule(l){5-6}
\textbf{m} & \textbf{L} & \textbf{b} & & $\theta$ & $\dot{\theta}$ & \\
\midrule
\multirow{2}{*}{2.0} & \multirow{2}{*}{1.0} & \multirow{2}{*}{0.5} & D-learning & \multirow{2}{*}{2.0} & \multirow{2}{*}{2.0} & 115 \\
& & & Reptile-D-learning & & & 58 \\
\midrule
\multirow{2}{*}{1.0} & \multirow{2}{*}{2.0} & \multirow{2}{*}{0.5} & D-learning & \multirow{2}{*}{-2.0} & \multirow{2}{*}{2.0} & 176 \\
& & & Reptile-D-learning & & & 106 \\
\midrule
\multirow{2}{*}{1.0} & \multirow{2}{*}{1.0} & \multirow{2}{*}{0.8} & D-learning & \multirow{2}{*}{2.0} & \multirow{2}{*}{-2.0} & 119 \\
& & & Reptile-D-learning & & & 49 \\
\bottomrule
\end{tabular}%
}
\end{table}

\subsection{Controller Performance and Generalization}
In this subsection, we evaluate Reptile-D-learning from two perspectives: (i) controller performance under nominal dynamics, and (ii) generalization under parameter shifts. All methods are tested on in-range settings with the same evaluation budget. During training, inverted-pendulum parameters are sampled from $m\in[0.5,2.0]$, $L\in[0.5,2.0]$, and $b\in[0.2,0.8]$; for the single-track car and UAV, uncertain parameters are varied within $\pm 50\%$ of nominal values.

\noindent\textbf{Controller Performance.}
Under nominal parameters, the Reptile-based controller achieves faster convergence and smaller steady-state error than the baselines while preserving Lyapunov-oriented stability constraints. For the inverted pendulum, the phase-portrait comparison in Fig.~\ref{fig:inv_compare_new} indicates faster closed-loop stabilization with Reptile-D-learning. For the single-track car, Fig.~\ref{fig:stcar_compare_new} shows both lower tracking error and improved trajectory quality under identical conditions. For the Crazyflie UAV, Fig.~\ref{fig:uav_compare_new} further demonstrates faster stabilization and shorter transient trajectories toward the target.
Quantitative convergence-step results on the in-range inverted-pendulum settings are reported in Table~\ref{tab:training_efficiency}.

We further visualize the RoA of the inverted pendulum in Fig.~\ref{fig:roa_inv_new} to interpret the observed performance gains. Reptile-D-learning yields a larger and steeper RoA, which provides a stronger Lyapunov-guided optimization signal and leads to faster closed-loop response and convergence.

\begin{figure*}[t]
    \centering
    \begin{minipage}{0.49\textwidth}
        \centering
        \includegraphics[width=\linewidth]{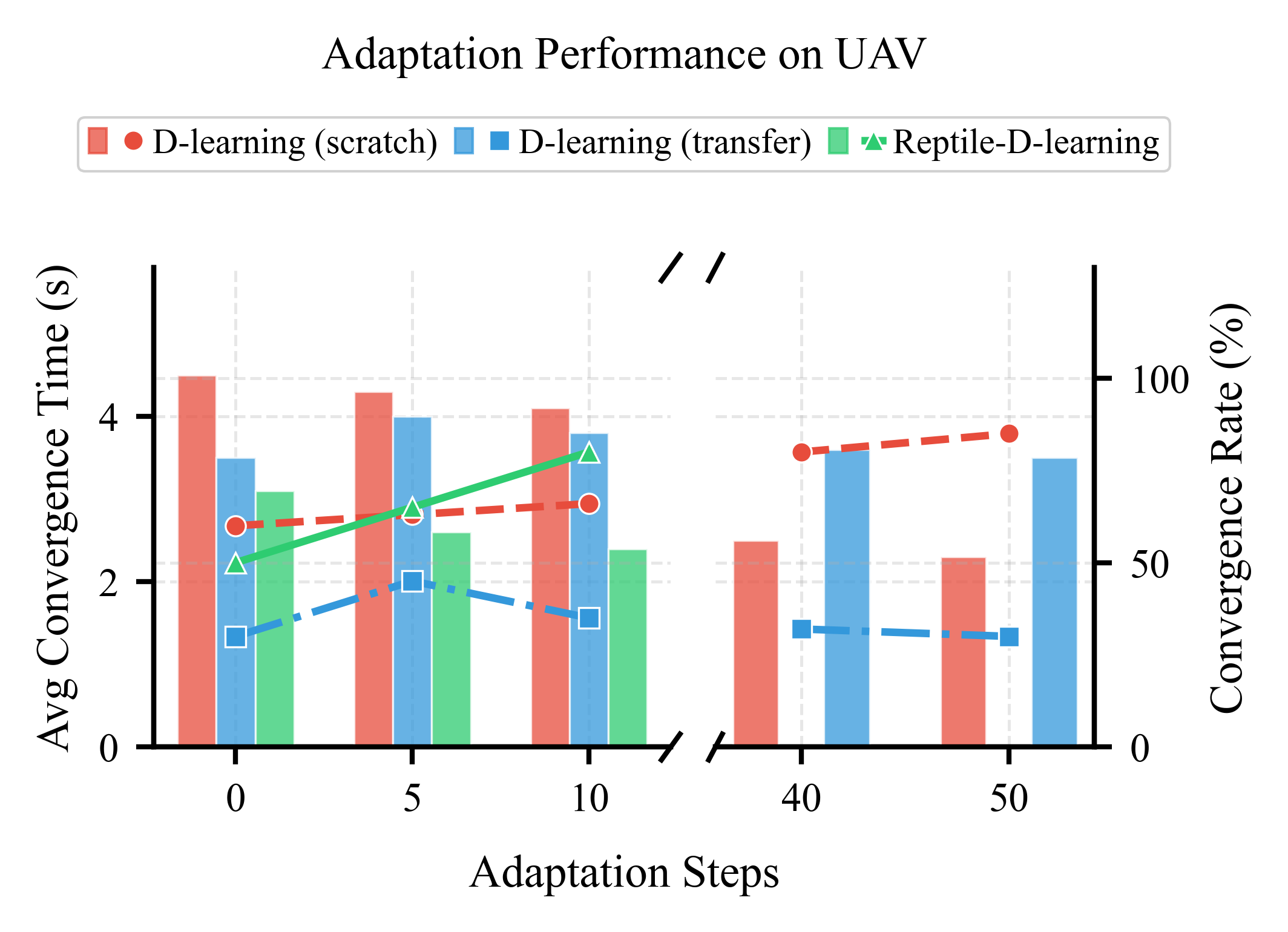}
        \captionof{figure}{Adaptation performance on UAV under OOD mass shift. Reptile-D-learning shows better convergence-time and success-rate trade-offs, especially in the few-step regime.}
        \label{fig:uav_adaptation_comparison}
    \end{minipage}
    \hfill
    \begin{minipage}{0.49\textwidth}
        \centering
        \includegraphics[width=\linewidth]{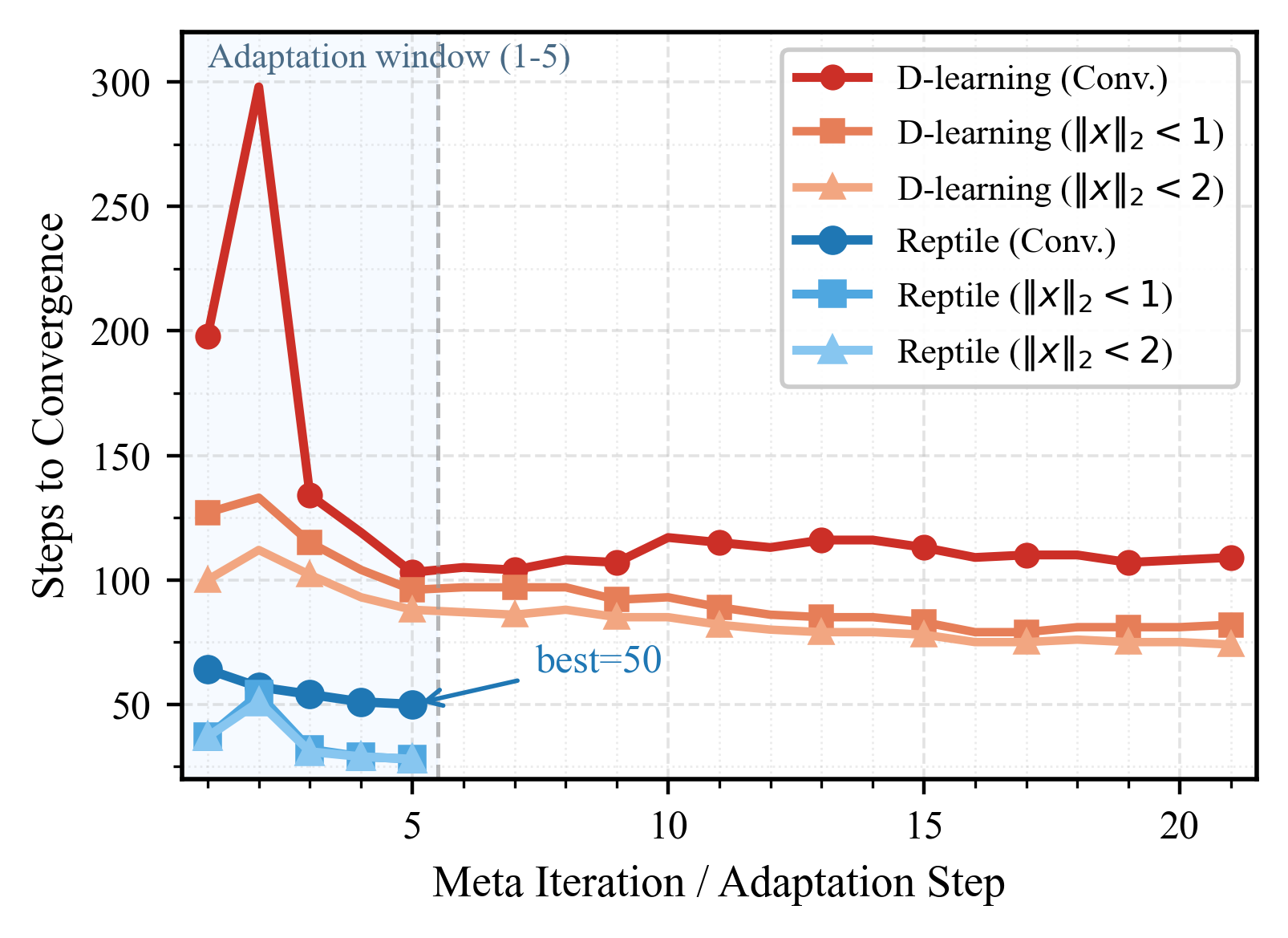}
        \captionof{figure}{Adaptation-window analysis. Within early adaptation steps, Reptile initialization reaches significantly fewer steps-to-convergence than standard D-learning across convergence criteria.}
        \label{fig:meta_vs_dlearning}
    \end{minipage}
\end{figure*}
\begin{figure*}[t]
    \centering
    \includegraphics[width=\textwidth]{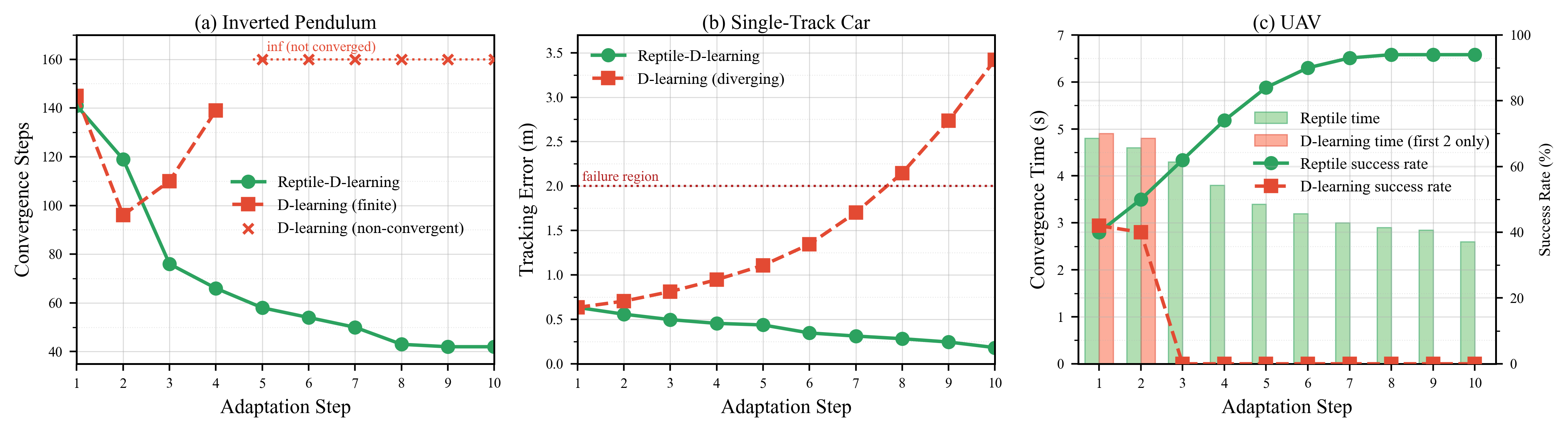}
    \caption{Unified ablation study across three systems. (a) Inverted pendulum: standard D-learning becomes non-convergent after early iterations, while Reptile-D-learning rapidly reduces convergence steps. (b) Single-track car: Reptile-D-learning steadily lowers tracking error, whereas D-learning diverges to meter-level error. (c) UAV: Reptile-D-learning shows decreasing convergence time and increasing success rate; D-learning maintains performance only in the first two rounds and then collapses to zero success rate.}
    \label{fig:adaptation_three_systems}
\end{figure*}

\noindent\textbf{Cross-Parameter Generalization.}
When parameters move away from the training distribution, baseline methods degrade significantly and may diverge, especially under severe shifts. In contrast, Reptile-D-learning maintains stronger stabilization and tracking performance, suggesting that the learned initialization captures transferable shared dynamics rather than overfitting to a single nominal model. This advantage is most evident in the UAV mass-shift case shown in Fig.~\ref{fig:hero}: standard D-learning often fails from challenging initial states, whereas Reptile-D-learning remains robust. Quantitative trends under shifted dynamics are further shown in Fig.~\ref{fig:tracking_generalization} and Fig.~\ref{fig:mass_robustness}.

\noindent\textbf{Overall Trend.}
Across all systems, the results show a consistent pattern: Reptile-D-learning improves both control quality and robustness to parameter mismatch. Quantitatively, it provides (a) higher stabilization success rates, (b) faster convergence, and (c) lower final tracking error under shifted dynamics. These observations support the effectiveness of jointly meta-learning the Lyapunov network, D-network, and policy for cross-parameter control.

\subsection{Adaptation Performance}
This subsection evaluates post-adaptation performance under limited adaptation steps. The core question is whether Reptile initialization can recover stability or achieve better control quality on OOD parameterized systems with fewer updates.

\noindent\textbf{Experimental Setup.}
We evaluate adaptation on two systems: the inverted pendulum and a UAV, both under fixed OOD settings. For the UAV, the mass and principal moments of inertia are set to 1.5$\times$ their nominal values. For the inverted pendulum, we use $m=3.0$, $L=1.0$, and $b=1.0$. Under these OOD configurations, we compare standard D-learning and Reptile-D-learning as a function of adaptation steps.

\noindent\textbf{Key Results.}
Under OOD dynamics, Reptile-D-learning consistently provides clear advantages in the few-step regime. For the inverted pendulum (Fig.~\ref{fig:meta_vs_dlearning}), Reptile reaches stable performance after only five adaptation iterations, whereas standard D-learning still cannot match it after 20 iterations. For the higher-dimensional UAV task (Fig.~\ref{fig:uav_adaptation_comparison}), Reptile-D-learning rapidly improves both task success rate and stabilization speed within ten adaptation steps, reaching a level that system-specific D-learning requires about 50 iterations to achieve. By contrast, transfer-and-finetune D-learning remains below this level even after more than 50 adaptation steps. These results indicate that the meta-initialization captures cross-task shared structure, so the inner loop mainly calibrates residual dynamics rather than relearning a full policy from scratch.
The quantitative adaptation trends are shown in Fig.~\ref{fig:uav_adaptation_comparison} and Fig.~\ref{fig:meta_vs_dlearning}.

\subsection{Ablation Study}

This subsection is designed to verify that the observed performance gains are attributable to Reptile-based meta-initialization, rather than to standard D-learning alone.

\noindent\textbf{Ablation results.}
Under the same training budget and the same degree of parameter randomization, we compare Reptile-D-learning (with Reptile meta-initialization) against standard D-learning. Results show that, once parameter randomization is introduced, standard D-learning fails to converge reliably across all three systems. Moreover, the divergence becomes more severe as system complexity and the number of varying parameters increase. These observations indicate that Reptile meta-initialization more effectively captures and preserves task-shared dynamical structure, thereby maintaining controller generality and stabilizing the training process.
A unified cross-system adaptation comparison is provided in Fig.~\ref{fig:adaptation_three_systems}.

\noindent\textbf{Computational cost (Reptile vs. MAML).}
Compared with other meta-learning methods, Reptile provides a clear computational advantage for D-learning's coupled loss structure. Under the same training configuration, Reptile typically requires no more than 5\,GB of GPU memory, while MAML requires second-order gradients and full computation-graph retention, resulting in memory demand above 900\,GB. Consequently, comparable MAML experiments are infeasible on an RTX 4080 SUPER (16\,GB). Under this practical constraint, Reptile is the most feasible meta-learning framework for D-learning.

\section{CONCLUSION}

We presented Reptile-D-learning, a unified bilevel framework that integrates D-learning and first-order meta-optimization for parametric adaptation in Lyapunov-based control.
By treating the Lyapunov network, Dfunction network, and controller as a single meta-parameter and optimizing post-adaptation performance across tasks, the method reduces the appearance of modular stacking and provides a coherent optimization view.
Future work will focus on tighter non-asymptotic analysis of adaptation error under distribution shift and hardware validation on real robotic platforms.

\bibliographystyle{IEEEtran}
\bibliography{refs}

\end{document}